\documentclass{article} 
\usepackage{iclr2027_conference,times}

\usepackage[hyphens]{url}  
\usepackage{graphicx} 
\usepackage{natbib}  
\usepackage{caption} 
\usepackage{algorithm}
\usepackage{algorithmic}

\usepackage{siunitx}

\usepackage{wrapfig}

\usepackage{mathtools}
\usepackage{amsthm}
\usepackage{amsfonts}

\newtheorem{theorem}{Theorem}
\newtheorem{lemma}[theorem]{Lemma}   

\newtheorem{proposition}[theorem]{Proposition}
\theoremstyle{remark}

\usepackage{makecell}
\usepackage{newfloat}
\usepackage{listings}
\DeclareCaptionStyle{ruled}{labelfont=normalfont,labelsep=colon,strut=off} 
\floatstyle{ruled}
\newfloat{listing}{tb}{lst}{}
\floatname{listing}{Listing}

\usepackage{booktabs}
\usepackage[table]{xcolor}

\usepackage{subcaption}

\usepackage{amsmath}

\usepackage[most]{tcolorbox}

\usepackage[normalem]{ulem}
\usepackage{cancel}

\newcommand{\striplabel}[4][inside]{%
  \begin{tikzpicture}
    \node[inner sep=0] (img) {\includegraphics[width=\linewidth]{#2}};
    \ifthenelse{\equal{#1}{above}}{%
      \node[anchor=south west, xshift=5pt, yshift=2pt,
            fill=#3, rounded corners=2pt, inner sep=4pt,
            font=\small\bfseries] at (img.north west) {#4};%
    }{%
      \node[anchor=north west, xshift=5pt, yshift=-5pt,
            fill=#3, rounded corners=2pt, inner sep=4pt,
            font=\small\bfseries] at (img.north west) {#4};%
    }
  \end{tikzpicture}}
\usepackage{newfloat}
\usepackage{listings}
\DeclareCaptionStyle{ruled}{labelfont=normalfont,labelsep=colon,strut=off} 
\floatstyle{ruled}
\newfloat{listing}{tb}{lst}{}
\floatname{listing}{Listing}

\usepackage{amsmath,amsfonts,bm}

\def\eqref#1{equation~\ref{#1}}

\def\1{\bm{1}}

\DeclareMathAlphabet{\mathsfit}{\encodingdefault}{\sfdefault}{m}{sl}
\SetMathAlphabet{\mathsfit}{bold}{\encodingdefault}{\sfdefault}{bx}{n}

\usepackage{hyperref}
\usepackage{url}

\title{Bellman Meets Lyapunov: Unsupervised\\Reinforcement Learning via Mastering Chaos}

\author{Tristan Shah$^{1*}$, Wooyoung Chung$^{1*}$, Volodomyr Makarenko$^{1}$, Juan Wachs$^{2}$, Stas Tiomkin$^{1\dagger}$ \\
$^{1}$Texas Tech University \enspace $^{2}$Purdue University \enspace $^{*}$Equal contribution \enspace $^{\dagger}$stas.tiomkin@ttu.edu
}

\iclrfinalcopy 
\begin{document}

\maketitle
\lhead{Preprint. Under review.}

\begin{abstract}
Reinforcement learning (RL) is a powerful paradigm for training agents, yet its success rests on domain expertise of human engineers who design informative reward signals for every new task. Unsupervised RL aims to reduce this engineering with intrinsic motivation (IM): reward signals that emerge from the agent environment interaction itself.
Existing IM objectives, however, involve the selection of information variables, which re-introduces domain expertise the field has sought to eliminate.
We introduce \textbf{Forward CIP} (F-CIP), an RL-native formulation of the Controllable Information Production (CIP) objective, which is defined by the system's dynamics alone and requires no such selection. We prove that F-CIP is compatible with RL and demonstrate its effectiveness with existing algorithms. Training agents with F-CIP results in unsupervised discovery of primitive behaviors such as balancing and maintaining controllability, which are essential for more complex robot behaviors. Paired with a simple forward-velocity reward, our method produces coordinated gaits such as hopping and running which otherwise require reward engineering to learn. Our website and code are available at the following link: \href{https://anonymous.4open.science/w/cip-rl-page-48C5}{\color{magenta}\nolinkurl{anonymous.4open.science/w/cip-rl-page-48C5}}

\end{abstract}

\begin{figure*}[b!]
  \vspace{-0.5\baselineskip}
  \centering
  \striplabel[above]{Figures/walker_forward_gallant-pyramid-226.pdf}{teal!20}{Walker --- Ours: F-CIP $+$ forward velocity reward} \\
  \striplabel{Figures/walker_forward_flowing-dragon-252.pdf}{gray!20}{Walker --- Baseline: forward velocity reward only} \\[4pt]
  \striplabel[above]{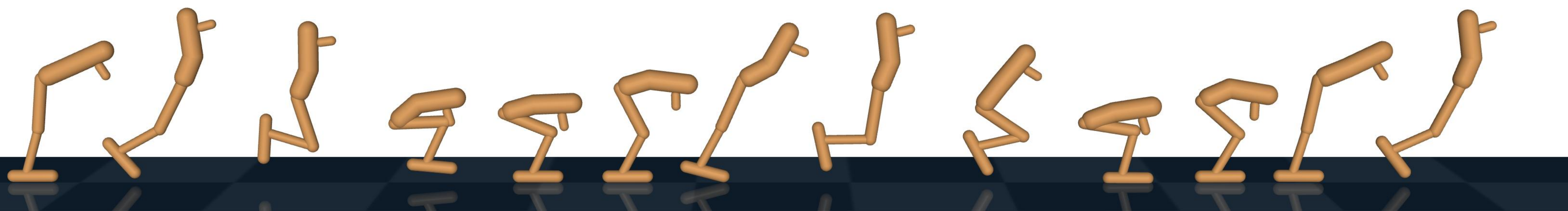}{teal!20}{Hopper --- Ours: F-CIP $+$ forward velocity reward} \\
  \striplabel{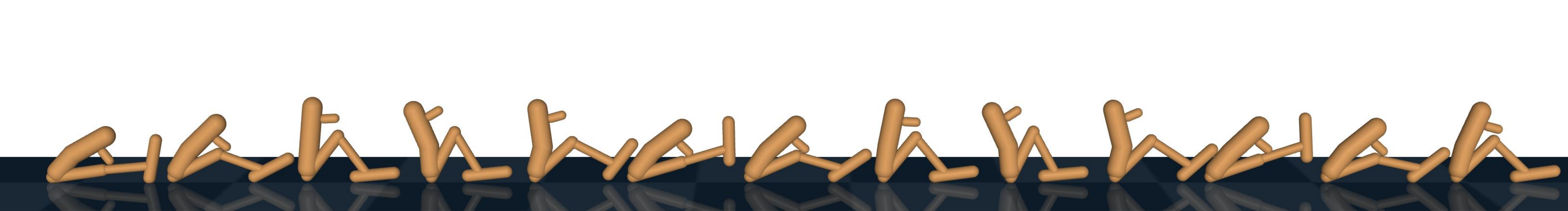}{gray!20}{Hopper --- Baseline: forward velocity reward only}
  \caption{\textbf{Emergent locomotion strategies on walker and hopper.} Training an agent with F-CIP and a forward velocity reward yields upright running and hopping ({\color{teal!70}\textbf{teal}}), whereas the velocity reward alone collapses to bouncing and scooting along the ground ({\color{gray}\textbf{gray}}).}
  \label{fig:locomotion-strips}
\end{figure*}

\section{Introduction}

The field of intrinsic motivation (IM) seeks to identify the goal-agnostic principles of self-motivation in biological systems and to replicate them in artificial ones. One such principle is the \textit{edge of chaos} (EOC) \citep{zhang2024intelligence, tower2024selectively, wissner_2013}: the hypothesis that intelligence arises from an agent's ability to operate at the boundary between stable and chaotic dynamics. In physically embodied agents, this boundary corresponds to states which are maximally unstable yet still controllable through active regulation. The upright stance of a humanoid is the canonical example: it affords walking, running, and hopping, yet it is also the most precarious configuration, since small mistakes readily lead to collapse. The EOC is thus distinct from pure chaos, where perturbations grow faster than a policy can suppress. In this work, we make the EOC directly optimizable as a reinforcement learning (RL) objective.

A plethora of IM objectives have been proposed, yet they are employed almost exclusively in auxiliary roles: as exploration bonuses added to an extrinsic reward \citep{raileanu2020ride, pathak2017curiosity, burda2018exploration}, or as pre-training signals whose value is only realized once a downstream task is imposed \citep{liu2021behaviorvoidunsupervisedactive, eysenbach2018diversityneedlearningskills, laskin2021urlb}. In neither role does the objective require that the behaviors it produces be useful in their own right: curiosity rewards vanish as the learned model improves, and diversity objectives are indifferent to whether the states being explored are dynamically trivial or rich. In both cases the objective is a heuristic designed by human intuition rather than derived from dynamical systems theory, lacking a principle (such as the EOC) that distinguishes significant states from insignificant ones.

Recently \citet{shah2026controllable} introduced Controllable Information Production (CIP), an IM objective that closes this gap. CIP is grounded in the sum of positive open-loop Lyapunov Exponents \citep{wolf1985determining, pesin1977characteristic}, a global property of dynamical systems that governs the entropy rate (in \si{\bit\per\sec}) at which the dynamics generate new distinguishable trajectories. Whereas visitation entropy weights all states alike, CIP weights states by the complexity of the dynamics passing through them, and is therefore maximized not by uniform state coverage but by occupying the most dynamically sensitive regions. A controller maximizing CIP was shown to drive agents to EOC regimes across a suite of benchmark tasks, including raising a humanoid-like system from a passive configuration to an upright stand. This demonstrated that IM grounded in a principled dynamical quantity can serve not merely as an exploration bonus, but rather as the primary objective of the agent.

Existing optimizers of CIP, however, are sampling-based MPC planners \citep{howell2022predictive, pinneri2020sampleefficientcrossentropymethodrealtime, williams2015model} which are ill-suited to the CIP objective. Such planners must forward-simulate a large number of candidate trajectories at every timestep and select the best action sequences. Because evaluating CIP requires running a Riccati recursion over the local Jacobians of each rollout, the cost of every discarded candidate trajectory is far higher than under a standard per-state cost. Gradient-based planners \citep{zhang2025whole, tassa2012synthesis} avoid the need for many rollouts and instead require a differentiable cost evaluated around a local trajectory. Since the CIP objective already depends on the state Jacobian, differentiating it for planning would demand second-order derivatives of the dynamics, which are impractical to obtain in high-dimensional systems. 

RL \citep{sutton1998reinforcement} is a natural alternative to MPC: it amortizes the sampling cost into a learning phase and delivers a reactive feedback policy. The resultant policy is faster to use at test time because it can be run without expensive re-planning at each step. Yet RL imposes two structural requirements on its objective: (i) the reward must be emitted incrementally during the forward agent-environment interaction, and (ii) the reward must be Markovian in the agent's state to support bootstrapped value learning. CIP in its original form violates both. It is computed by a backward-in-time Riccati recursion: the increment at the current time depends on a matrix propagated from the terminal time, i.e., on the entire future trajectory.

We introduce \textbf{Forward CIP} (F-CIP)\footnote{\emph{Throughout, CIP refers to the underlying quantity (the sum of positive open-loop LE) while F-CIP refers to our forward in time per-step decomposition and the reward it exposes; the two coincide in Theorem~\ref{thm:main}.}}: a forward-in-time incremental decomposition of CIP that makes it, for the first time, directly optimizable by RL. This resolves requirement (i) by exposing an instantaneous per-step reward, and requirement (ii) by identifying the information matrix $S_t$ as a sufficient statistic of the past: augmenting the state with $S_t$ exactly restores the Markov property (Proposition~\ref{prop:markov}). We prove (Lemma~\ref{lem:forward_cip}, Theorem~\ref{thm:main}) that the forward iteration converges to the same quantity as the backward recursion, so no approximation is introduced. Empirically, training PPO \citep{schulman2017proximalpolicyoptimizationalgorithms} on the F-CIP reward alone yields unsupervised self-righting and standing. With a single additional velocity term, agents learn upright hopping and running; the same velocity reward without F-CIP collapses to degenerate gaits. 

\textbf{Our contributions are summarized as follows:}
\begin{enumerate}
    \item \textbf{Theory.} A forward-in-time characterization of CIP as a sum of per-step increments.
    \item  \textbf{Algorithm.} A numerically stable log-domain iteration exposing a per-step reward and an information-matrix memory $S_t$, casting CIP maximization as a standard RL problem.
    \item \textbf{Experiments.} The first RL agent trained directly on the CIP objective. Maximizing it drives the unsupervised discovery of primitive behaviors, while combining it with a downstream objective yields coherent locomotion strategies such as hopping and running which otherwise require reward engineering to learn.
\end{enumerate}

\section{Related Work}

Existing IM objectives fall broadly into four families: curiosity, skill discovery, state-entropy maximization, and EOC. We review each and position CIP against them.

\paragraph{Curiosity.}
Curiosity-driven methods reward the surprise of a learned dynamics model, measured as prediction error \citep{pathak2017curiosity, burda2018large}, error of a random target network \citep{burda2018exploration}, or ensemble disagreement \citep{pathak2019selfsupervisedexplorationdisagreement}. These objectives are defined relative to the deficiencies of the learned model rather than a property of the environment itself. Consequently, the reward signal can vanish as the model improves \citep{raileanu2020ride}, and it is maximized by irreducibly unpredictable stimuli regardless of their behavioral relevance, e.g., the noisy-TV problem \citep{burda2018exploration}. CIP instead rewards a dynamical invariant of the system: its signal neither vanishes with learning progress nor depends on model error.

\paragraph{Skill-Discovery.}
Skill-discovery methods \citep{zheng2025can} maximize the mutual information between a latent skill variable and the states that the agent visits under that skill \citep{eysenbach2018diversityneedlearningskills, sharma2019dynamics, gregor2016variationalintrinsiccontrol}. This principle, however, is agnostic to where the state space is behaviorally significant. For example a diverse set of trivially stable skills that partition the state space satisfies the objective as well as a set of dynamically rich ones. CIP takes an alternative approach: it concentrates the agent at the states of maximal dynamical instability rather than spreading it over the entire state space.

\paragraph{State-entropy maximization.}
A related family directly maximizes the entropy of the state-visitation distribution \citep{liu2021behaviorvoidunsupervisedactive, lee2025semdice}. Although both visitation entropy and CIP are entropic quantities, they measure different things: visitation entropy is a static property of the state-visitation distribution, whereas CIP is the rate at which uncertainty grows in open-loop along trajectories induced by the agent. The two are independent: an agent can achieve maximal visitation entropy while residing mostly in regions of low CIP, and conversely can hold a single EOC state of high CIP with near-zero visitation entropy.

\paragraph{Edge of Chaos.}
The closest objectives to our work are also based on the EOC principle. Empowerment \citep{klyubin2005empowerment} is the channel capacity from the agent's actions to its future states; it generalizes the controllability Gramian of linear systems and has been related to the finite-time Lyapunov exponent (LE) spectrum \citep{tiomkin2024intrinsic}. Empowerment, however, has resisted exact factorization into a sum of rewards: channel capacity over a multi-step horizon is a supremum over action distributions that does not separate into per-step increments. Consequently, RL with empowerment has relied on variational lower bounds and fixed-horizon estimators \citep{karl2017unsupervisedrealtimecontrolvariational}. CIP shares the EOC principle but is defined directly at the level of LE, and was previously optimized only by sampling-based MPC for a similar reason: its backward recursion admits no online reward. The present work removes this obstruction.

Similar to our method, \citet{phu_nguyen_2025} also use the sum of positive LEs as a reward, but compute it offline along passive trajectories over a fixed horizon. This design has three limitations. First, precomputing LEs over a large continuous state space is computationally expensive and does not scale. Second, the fixed horizon introduces a truncation hyperparameter. Third, each per-step reward is already a sum of positive LEs over the horizon, so the return accumulates overlapping sums and has no clear interpretation. In F-CIP, the reward is an incremental per-step quantity, and the value function itself learns the sum of positive LEs.

To our knowledge, F-CIP is the first dynamics-grounded intrinsic RL objective with an exact per-step decomposition, requiring neither a variational bound nor a fixed optimization horizon.

\section{Controllable Information Production}

CIP is defined as the sum of positive LE of the open-loop dynamical control system $f: \mathcal{X} \times \mathcal{U} \to \mathcal{X}$ where $\mathcal{U}$ is the action space. Positive open-loop LE quantify the entropy production rate which the agent is exposed to along its trajectory, considering that infinitesimal perturbations grow without regulation due to feedback by a policy $\pi: \mathcal{X} \to \mathcal{U}$.
\begin{equation*}
    \textrm{CIP} := \Lambda^+(f^\mathbf{ol}), \qquad f^{\mathbf{ol}}(x) = f(x, \mathbf{sg}(\pi(x)))
\end{equation*}
Where $\Lambda^+(\cdot)$ is the sum of positive LE of the dynamical system. The $\mathbf{sg}$ symbol denotes the stop-gradient operation. Previously it was shown that CIP can be calculated via a backward recursion:
\begin{equation}\label{eq:backwards_log_domain} 
    \begin{aligned}
        h_t &= h_{t+1} + \log\det(Y_{t+1} + f_{x_t} f_{x_t}^\top), &\qquad h_T &= 0, \\ 
        Y_{t} &= \mathbf{I} - f_{x_t}^\top (Y_{t+1} + f_{x_t} f_{x_t}^\top)^{-1} f_{x_t}, &\qquad Y_T &= \mathbf{I},
    \end{aligned}
\end{equation}
where $f_{x_t} := \tfrac{\partial f}{\partial x} |_{(x_t, u_t)}$ is the local Jacobian with respect to state. If $h_t$ is propagated backward in time along an infinitely long trajectory then its one-half time average converges to $\Lambda^+(f^\mathbf{ol})$:
\begin{equation}
    \lim_{T\to\infty} \frac{1}{2T\Delta t} h_0 = \Lambda^+(f^{\mathbf{ol}}).
\end{equation}
The increment between $h_t$ and $h_{t+1}$ in Equation~\ref{eq:backwards_log_domain} is not available to the agent during the forward rollout as a reward quantity: the source of CIP's incompatibility with RL. Furthermore, it depends on $Y_{t+1}$, which is a function of the entire future trajectory: breaking the Markov Property required for bootstrapped value learning. In the next section we present a dual forward-in-time calculation of CIP which enables training with RL.

\section{Proposed Method for Forward Estimation of CIP}\label{sec:method}

We factor the calculation of CIP into a sum of increments emitted during the forward agent-environment interaction, so that each increment can serve as an instantaneous reward. Our key insight comes from the control-estimation duality \citep{todorov_2008_general}: the backward Riccati recursion in Equation~\ref{eq:backwards_log_domain} is a control-side object, and its estimation-side dual is a forward-in-time covariance iteration whose increments depend only on the current Jacobian $f_{x_t}$ and a propagated information matrix $S_t$. We develop this in four steps: the forward iteration converges to the same open-loop KSE as the backward recursion (Lemma~\ref{lem:forward_cip}); a log-domain form of the iteration is numerically stable and exposes the per-step reward (Lemma~\ref{lem:log_domain}); the augmented state $z_t := (x_t, S_t)$ defines a legitimate MDP (Proposition~\ref{prop:markov}); and the infinite-horizon return of that MDP is exactly CIP (Theorem~\ref{thm:main}).

The uncertainty in the agent's state is given by a covariance matrix $\Sigma_t$. The time evolution of this uncertainty is determined by the local Jacobian as well as the covariance of the process noise $W_t$ injected at each step. The following lemma shows that its log-volume growth rate recovers CIP exactly.

\begin{lemma}[Forward covariance characterization of CIP]\label{lem:forward_cip}
Let:
\begin{equation}\label{eq:cov_prop}
    \Sigma_{t+1} = W_{t+1} + f_{x_t} \Sigma_t f_{x_t}^\top, \qquad \Sigma_{0} = W_0,
\end{equation}
be the open-loop covariance propagation equation for state uncertainty, where $W_t$ is the
process-noise covariance injected at step $t$, uniformly bounded and positive definite. Then:
\begin{equation*}
    \lim_{T\to\infty} \frac{1}{2T\Delta t}\ln\det \Sigma_T = \mathrm{CIP}.
\end{equation*}
\end{lemma}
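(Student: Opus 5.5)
Our plan is to sandwich $\ln\det\Sigma_T$ between two quantities whose growth rates are both governed by the singular values of the transition cocycle $\Phi_T := f_{x_{T-1}}\cdots f_{x_0}$. We then invoke the Oseledets multiplicative ergodic theorem, already implicit in the backward characterization $\lim_{T\to\infty}\frac{1}{2T\Delta t}h_0=\Lambda^+(f^{\mathbf{ol}})$. That theorem states that the time-averaged log singular values of $\Phi_T$ converge to the Lyapunov exponents $\lambda_i$ (per unit time $\Delta t$).

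\textbf{Step 1: unroll the recursion.} Iterating \eqref{eq:cov_prop} gives the explicit sum
\begin{equation*}
\Sigma_T=\sum_{k=0}^{T}\Phi_{T,k}W_k\Phi_{T,k}^\top,\qquad \Phi_{T,k}:=f_{x_{T-1}}\cdots f_{x_k},\quad \Phi_{T,T}=\mathbf I .
\end{equation*}
Since $0\prec w\mathbf I\preceq W_k\preceq \bar w\mathbf I$ uniformly, and $\det$ is monotone in the Loewner order,
\begin{equation*}
\ln\det\Big(w\sum_{k=0}^{T}\Phi_{T,k}\Phi_{T,k}^\top\Big)\le \ln\det\Sigma_T\le\ln\det\Big(\bar w\sum_{k=0}^{T}\Phi_{T,k}\Phi_{T,k}^\top\Big).
\end{equation*}
The constants $w,\bar w$ contribute $O(n)$ terms, which vanish after division by $T$. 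It therefore suffices to study the Gramian $G_T:=\sum_{k=0}^{T}\Phi_{T,k}\Phi_{T,k}^\top$.

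\textbf{Step 2: lower bound.} We have $G_T\succeq \mathbf I+\Phi_T\Phi_T^\top$. Then
\begin{equation*}
\ln\det(\mathbf I+\Phi_T\Phi_T^\top)=\sum_i\ln\big(1+\sigma_i(\Phi_T)^2\big)\ge 2\sum_i\max\big(0,\ln\sigma_i(\Phi_T)\big).
\end{equation*}
Dividing by $2T\Delta t$ and applying Oseledets, the right-hand side tends to $\sum_i\max(0,\lambda_i)=\Lambda^+$. Note that the $\mathbf I$ term is exactly what removes the negative exponents, mirroring $Y_T=\mathbf I$ in the backward recursion.

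\textbf{Step 3: upper bound.} Use $\ln\det G_T=\sum_i\ln\sigma_i(G_T)$, and bound the top-$m$ eigenvalue products through exterior powers. For each $m$, $\prod_{i\le m}\sigma_i(G_T)$ is controlled by $\|\wedge^m G_T\|$. By subadditivity this is at most a polynomial in $T$ times $\max_k\|\wedge^m\Phi_{T,k}\|^2$. Given $\epsilon>0$, Oseledets (in its uniform, tempered form along the trajectory) should give $\|\wedge^m\Phi_{T,k}\|\le C_\epsilon e^{(T-k)\Delta t(\sum_{i\le m}\lambda_i+\epsilon)+k\Delta t\epsilon}$. Choosing $m$ as the number of positive exponents, maximizing over $k$ puts the dominant term at $k=0$, and the remaining $n-m$ eigenvalues grow subexponentially. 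Taking $\limsup\frac{1}{2T\Delta t}$ and then $\epsilon\to0$ yields $\le\Lambda^+$.

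\textbf{Main obstacle.} We expect the hardest step to be the upper bound. It requires control of $\Phi_{T,k}$ for all intermediate start times $k$, not just $k=0$: the sum mixes cocycle segments starting at different times, and only tempered (subexponential-in-$k$) deviation bounds from Oseledets regularity make the $\epsilon$-slack uniform. We would first try to avoid this by matching with the backward recursion. Via the matrix determinant lemma and the control–estimation duality, $\det\Sigma_T$ and $e^{h_0}$ should agree up to boundary factors bounded in terms of $w,\bar w$ and $\det$ of $\mathbf I$-regularized terms. Then Step 3 follows from the already-stated backward result, and only Step 1's constant bookkeeping remains.
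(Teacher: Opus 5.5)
Your Steps~1 and~2 are sound. The Loewner sandwich $wG_T\preceq\Sigma_T\preceq\bar wG_T$ is a self-contained substitute for the weighting-invariance lemma the paper cites, and $G_T\succeq\mathbf I+\Phi_T\Phi_T^\top$ gives the correct $\liminf\ge\Lambda^+$. The paper proves neither half directly: after the same unrolling it reindexes with $\Psi_\tau=\Phi_{T-\tau:T}^\top$ and invokes Lemmas~A.2 and~A.3 of \citet{shah2026controllable}.

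The gap is Step~3. The bound $\|\wedge^m G_T\|\le\mathrm{poly}(T)\max_k\|\wedge^m\Phi_{T,k}\|^2$ does not follow from subadditivity. The reason is that $\wedge^m$ of a sum contains mixed terms $\Phi_{T,k_1}v_1\wedge\cdots\wedge\Phi_{T,k_m}v_m$ across different start times; already $\det(e_1e_1^\top+e_2e_2^\top)=1$ while each summand has vanishing $\wedge^2$. Here the mechanism is wrong, not just unjustified. At level $m=n$ it would give $\det G_T\le e^{2T\Delta t\max(0,\sum_i\lambda_i)+o(T)}$, which contradicts your own Step~2 whenever exponents of both signs occur. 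The mixed terms are exactly what produce $\Lambda^+$. Likewise, the claim that the remaining $n-m$ eigenvalues of $G_T$ grow subexponentially is the whole upper bound in disguise, not a consequence of the top-$m$ estimate.

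Your fallback also fails. With $W_t=\mathbf I$, $e^{h_0}=\det\sum_t\Phi_{t,0}^\top\Phi_{t,0}$ sums \emph{prefix} products, while $\Sigma_T=\sum_k\Phi_{T,k}\Phi_{T,k}^\top$ sums \emph{suffix} products. For $n=1$, $f_{x_0}=a$, $f_{x_1}=b$, these are $1+a^2+a^2b^2$ versus $1+b^2+a^2b^2$. An expanding block followed by a contracting block makes their ratio grow exponentially in $T$. The duality pairs the forward filter with the backward recursion for the reversed adjoint sequence $f_{x_{T-1}}^\top,\dots,f_{x_0}^\top$ (which is what the paper's $\Psi_\tau$ encodes), not with $h_0$ on the same trajectory. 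So no bounded-factor identity is available.

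A repair that stays within your framework is to go straight to the determinant.
\begin{enumerate}
\item Apply Cauchy--Binet to $M=[\Phi_{T,0},\dots,\Phi_{T,T}]$, so that $\det G_T=\sum_S\det(M_S)^2$ over $\binom{n(T+1)}{n}=O(T^n)$ column subsets $S$.
\item Take a subset with columns $\Phi_{T,k_j}e_{i_j}$, ordered so that $k_1\le\dots\le k_n$. Write $\Phi_{T,k_j}=\Phi_{T,k_n}\Phi_{k_n,k_j}$ and peel off one vector at a time using $\|\omega\wedge v\|\le\|\omega\|\,\|v\|$. This gives $|\det M_S|\le\prod_{j=1}^n\|\wedge^j\Phi_{k_{j+1},k_j}\|$ with $k_{n+1}:=T$.
\item Apply your tempered bounds to the segments $\Phi_{b,a}$, with $\chi_j:=\sum_{i\le j}\lambda_i$. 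The exponent is then at most $\Delta t\sum_j(k_{j+1}-k_j)(\chi_j+\epsilon)+n\epsilon T\Delta t\le T\Delta t(\Lambda^++(n+1)\epsilon)$, because $\chi_j\le\Lambda^+$ for every $j$.
\end{enumerate}
This yields $\limsup\frac{1}{2T\Delta t}\ln\det G_T\le\Lambda^+$ with no separate treatment of the lower eigenvalues. The result is a proof that is self-contained where the paper's is by citation, at the cost of assuming Lyapunov regularity (tempered estimates) along the trajectory.
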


\begin{proof}
Unrolling Equation~\ref{eq:cov_prop} yeilds a Grammian matrix:
\begin{equation*}
    \Sigma_T = \sum_{t=0}^T \Phi_{t:T}\, W_t\, \Phi_{t:T}^\top,
\end{equation*} with
$\Phi_{t:T} = \prod_{k=t}^{T-1} f_{x_k}$ as a product of Jacobians. Reindexing $\tau = T-t$ and setting $\Psi_\tau := \Phi_{T-\tau:T}^\top$ casts this as the cumulative-sensitivity sum $\Sigma_T = \sum_{\tau=0}^{T} \Psi_\tau^\top\, W_{T-\tau}\, \Psi_\tau$, where each $\Psi_\tau$ is a
length-$\tau$ product of trajectory Jacobians. Because $\Psi_\tau$ and $\Phi_{T-\tau:T}$ share
singular values, the products exhibit the exponential growth $\|\Psi_\tau v_i\| \sim e^{\tau \Delta t \lambda_i^{\mathbf{ol}}}$ governed by open-loop Lyapunov Exponents $\{\lambda_i^\mathbf{ol}\}$. This is exactly the object of \citet[Lemma~A.2]{shah2026controllable}, whose weighting invariance \citep[Lemma~A.3]{shah2026controllable} admits any uniformly bounded positive-definite $\{W_t\}$; together they give $\tfrac{1}{2T\Delta t}\ln\det \Sigma_T \to \sum_{\lambda_i^{\mathbf{ol}}>0}\lambda_i^{\mathbf{ol}} = \Lambda^+(f^\mathbf{ol})$ as $T\to \infty$.
\end{proof}

Direct propagation of Equation~\ref{eq:cov_prop} is numerically unstable, since the
condition number of $\Sigma_t$ grows exponentially with time. Furthermore, there is no additive increment available to interpret as a reward. To overcome this instability we instead propagate $\Sigma_t$ in the $\ln\det$ domain via an iteration derived in Lemma~\ref{lem:log_domain} which exposes an instantaneous reward signal.

\begin{wrapfigure}{r}{0.5\linewidth}
    \vspace{-1em}
    \centering
    \includegraphics[width=\linewidth]{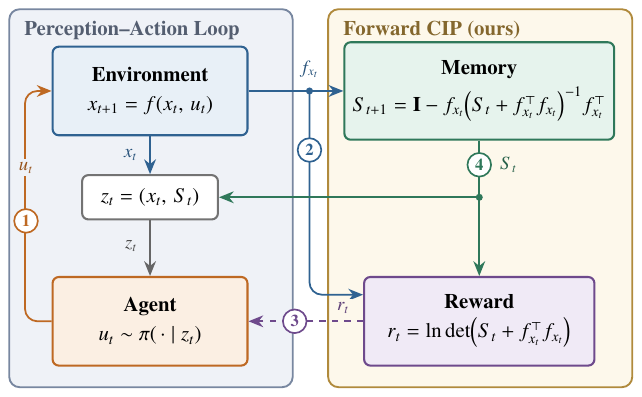}
    \caption{\textbf{Schematic of our method.} The environment returns the Jacobian $f_{x_t}$, which is used to emit reward $r_t$ and then update the memory $S_t$.}
    \label{fig:main-figure}
    \vspace{-1em}
\end{wrapfigure}

\begin{lemma}[Log-domain iteration]\label{lem:log_domain}
Let $h_t := \ln\det \Sigma_t$ and let $S_t := \Sigma_t^{-1}$ be the information matrix, and
choose $W_t = \mathbf{I}$ (admissible by the weighting invariance of Lemma~1). Then:
\begin{align}
    h_{t+1} &= h_t + \ln\det\!\left(S_t + f_{x_t}^\top f_{x_t}\right), & h_0 &= 0, \label{eq:entropy_increment}\\
    S_{t+1} &= \mathbf{I} - f_{x_t}\!\left(S_t + f_{x_t}^\top f_{x_t}\right)^{-1}\! f_{x_t}^\top,
        & S_0 &= \mathbf{I}. \label{eq:memory_increment}
\end{align}
\end{lemma}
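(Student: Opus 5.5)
The plan is to derive both recursions directly from the covariance propagation of Lemma~\ref{lem:forward_cip}, specialized to $W_t = \mathbf{I}$. Write $F := f_{x_t}$ for brevity, so that $\Sigma_{t+1} = \mathbf{I} + F\Sigma_t F^\top$ with $\Sigma_0 = \mathbf{I}$. The initial conditions are then immediate: $h_0 = \ln\det\mathbf{I} = 0$ and $S_0 = \mathbf{I}^{-1} = \mathbf{I}$.

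The first step is to check well-posedness, i.e.\ that every inverse appearing in the statement exists. I will show by induction that $\Sigma_t \succeq \mathbf{I} \succ 0$ for all $t$. This holds at $t=0$, and if $\Sigma_t \succ 0$ then $F\Sigma_t F^\top \succeq 0$, so $\Sigma_{t+1} \succeq \mathbf{I}$. Hence $S_t = \Sigma_t^{-1}$ is well defined and positive definite. Moreover $S_t + F^\top F \succeq S_t \succ 0$, so $\left(S_t + F^\top F\right)^{-1}$ exists and its $\ln\det$ is finite. This argument requires no invertibility of the Jacobian $F$ itself.

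The second step is the increment in Equation~\ref{eq:entropy_increment}. I will factor
\[
\det\Sigma_{t+1} = \det\!\left(\mathbf{I} + F\Sigma_t F^\top\right).
\]
By Sylvester's determinant identity $\det(\mathbf{I}+AB)=\det(\mathbf{I}+BA)$, taking $A = F$ and $B = \Sigma_t F^\top$, this equals
\[
\det\!\left(\mathbf{I} + \Sigma_t F^\top F\right) = \det\!\left(\Sigma_t\left(S_t + F^\top F\right)\right) = \det\Sigma_t \,\det\!\left(S_t + F^\top F\right).
\]
Taking logarithms gives $h_{t+1} = h_t + \ln\det\!\left(S_t + F^\top F\right)$.

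The third step is the memory update in Equation~\ref{eq:memory_increment}. I will invert $\Sigma_{t+1} = \mathbf{I} + F\Sigma_t F^\top$ using the Woodbury identity
\[
(\mathbf{I} + U C V)^{-1} = \mathbf{I} - U\left(C^{-1} + V U\right)^{-1} V,
\]
with $U = F$, $C = \Sigma_t$, $V = F^\top$. This yields
\[
S_{t+1} = \mathbf{I} - F\left(S_t + F^\top F\right)^{-1} F^\top,
\]
which is exactly the claimed recursion. The only hypotheses Woodbury needs are invertibility of $C = \Sigma_t$ and of $C^{-1} + VU = S_t + F^\top F$, both established in the first step.

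Each step is a standard matrix identity, so I do not expect a substantive obstacle. The one point needing care is that first step: confirming that all inverses exist without assuming $f_{x_t}$ is nonsingular, since Jacobians of contact-rich dynamics can be rank-deficient. The induction $\Sigma_t \succeq \mathbf{I}$ handles this. As a side remark, the resulting form never forms $\Sigma_t$ explicitly, which is the source of the numerical stability claimed in the text. It also exhibits $\ln\det\!\left(S_t+f_{x_t}^\top f_{x_t}\right)$ as an additive per-step reward, and it mirrors the backward recursion in Equation~\ref{eq:backwards_log_domain} with $f_{x_t}$ replaced by $f_{x_t}^\top$, as expected from the control--estimation duality.
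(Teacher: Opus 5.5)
Your proposal is correct and follows the paper's proof: specialize to $W_t=\mathbf{I}$, then use Sylvester's determinant identity for the $h_t$ increment and the Woodbury identity for $S_{t+1}$. Your induction $\Sigma_t \succeq \mathbf{I}$, which shows every inverse exists without assuming $f_{x_t}$ is nonsingular, is a detail the paper's proof leaves implicit; the paper only notes $\Sigma_t \succeq \mathbf{I}$ in the remark after the lemma.
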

\begin{proof}
With $W_t = \mathbf{I}$, Equation~\ref{eq:cov_prop} reduces to
$\Sigma_{t+1} = \mathbf{I} + f_{x_t} \Sigma_t f_{x_t}^\top$ with $\Sigma_0 = \mathbf{I}$.
Taking $\ln\det$ and applying Sylvester's identity yields the first equation; inverting and
applying the Woodbury identity yields the second.
\end{proof}


The iteration in Lemma~\ref{lem:log_domain} is numerically stable: because $\Sigma_t \succeq \mathbf{I}$ for all $t$, the information matrix is bounded, $0 \preceq S_t \preceq \mathbf{I}$, so the exponential growth of $\Sigma_t$ underflows towards zero in $S_t$, remaining well-conditioned as a neural network input.

The increment in Equation~\ref{eq:entropy_increment} defines a per-step reward:
\begin{equation}\label{eq:reward}
    r(x_t, u_t, S_t) := \ln\det\!\left(S_t + f_{x_t}^\top f_{x_t}\right).
\end{equation}
The reward carries the memory term $S_t$ in its signature, so it is not Markovian in $x_t$
alone. The following proposition shows that $S_t$ is a sufficient statistic of the past:
augmenting the state with it recovers a legitimate Markov decision process as stated in the following proposition.

\begin{proposition}[Markov restoration]\label{prop:markov}
Let $z_t := (x_t, S_t) \in \mathcal{Z} := \mathcal{X} \times \mathcal{S}$, where
$\mathcal{S}$ denotes the symmetric matrices satisfying $0 \preceq S \preceq \mathbf{I}$.
Then the tuple $(\mathcal{Z}, \mathcal{U}, p_z, r)$, with $r$ as in
Equation~\ref{eq:reward}, is a Markov decision process.
\end{proposition}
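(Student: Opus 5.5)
The argument checks the ingredients of an MDP one by one: the state space, the transition kernel on the augmented state $z_t$, and the reward as a function of $(z_t,u_t)$ only. The idea is that $S_{t+1}$ is a deterministic function of $(x_t,u_t,S_t)$ via Equation~\ref{eq:memory_increment}. The Jacobian $f_{x_t}=\partial f/\partial x|_{(x_t,u_t)}$ depends only on the current state--action pair, so adding $S_t$ to the state makes both the transition and the reward functions of the current augmented state and action.

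\emph{Step 1: the update maps into $\mathcal{S}$.} Write $F:=f_{x_t}$. If $S\in\mathcal{S}$, then $S+F^\top F\succeq F^\top F$, but it need not be invertible when $S$ is singular. I would handle this in one of two ways. The first option is to note that along any trajectory started from $S_0=\mathbf{I}$ we have $S_t=\Sigma_t^{-1}\succ 0$, since $\Sigma_t\succeq\mathbf{I}$ is finite. The second is to restrict $\mathcal{S}$ to $0\prec S\preceq\mathbf{I}$, or to interpret the update through $\Sigma$. Given invertibility, $S_{t+1}=(\mathbf{I}+F S_t^{-1}F^\top)^{-1}$ by Woodbury (Lemma~\ref{lem:log_domain}). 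This matrix is symmetric, it is $\preceq\mathbf{I}$ because $\mathbf{I}+FS_t^{-1}F^\top\succeq\mathbf{I}$, and it is $\succ0$. So the update $g(x,u,S)$ maps $\mathcal{Z}\times\mathcal{U}$ into $\mathcal{S}$ and is measurable, being continuous in $(S,F)$ on the positive-definite cone. I expect this step, the well-posedness of the inverse at the boundary $S\succeq 0$, to be the main technical obstacle. I would first try the reachable-set argument and, if needed, remark that the closure case can be covered by continuity.

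\emph{Step 2: the transition kernel.} Given the environment kernel $p(x'\mid x,u)$, or deterministic $x'=f(x,u)$, define
\[
p_z\bigl((x',S')\mid (x,S),u\bigr)=p(x'\mid x,u)\,\delta_{g(x,u,S)}(S').
\]
Then $\Pr(z_{t+1}\in\cdot\mid z_0,u_0,\dots,z_t,u_t)=p_z(\cdot\mid z_t,u_t)$. This holds because $x_{t+1}$ depends only on $(x_t,u_t)$ by the Markov property of the original system, and $S_{t+1}$ is a fixed function of $(x_t,u_t,S_t)$.

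\emph{Step 3: the reward.} By Equation~\ref{eq:reward}, $r$ depends only on $(x_t,u_t,S_t)=(z_t,u_t)$. It is well defined and finite, since $\det(S_t+F^\top F)>0$ on the reachable set. It is bounded above on bounded Jacobian sets because $S_t\preceq\mathbf{I}$. Since the reward is a function of the current augmented state and action alone, and the history enters only through the sufficient statistic $S_t$, the tuple $(\mathcal{Z},\mathcal{U},p_z,r)$ is an MDP.
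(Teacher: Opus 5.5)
Your proof is correct and follows the paper's argument. The transition factorizes into the environment step times a Dirac mass at the deterministic memory update computed through $f_{x_t}$, and the reward is a function of $(z_t,u_t)$ alone.

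Your Step 1 adds rigor that the paper's proof omits. The paper's $\mathcal{S}$ includes singular $S$, where $S+f_{x_t}^\top f_{x_t}$ can fail to be invertible. The clean fix is your second option: restrict to the update-invariant set $\{0\prec S\preceq\mathbf{I}\}$, which contains every state reachable from $S_0=\mathbf{I}$. Your continuity fallback would not help at points where $S+f_{x_t}^\top f_{x_t}$ is singular, since the reward there is $-\infty$.
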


\begin{proof}
The transition of $z_t$ factorizes: $x_{t+1} \sim p(\cdot \mid x_t, u_t)$ is the
environment step, and $S_{t+1}$ in Equation~\ref{eq:memory_increment} is a deterministic
function of $(x_t, u_t, S_t)$ through the state Jacobian $f_{x_t}$. The reward
$r(z_t, u_t)$ depends only on the current augmented state and action. Hence $z_{t+1}$ and
$r_t$ are determined by $(z_t, u_t)$ alone.
\end{proof}

In the following theorem, we state the main result: the return of this MDP is exactly the CIP
estimator of Lemma~\ref{lem:forward_cip}.

\begin{theorem}[Exact decomposition of CIP]\label{thm:main}
In the MDP of Proposition~\ref{prop:markov}, the undiscounted return telescopes,
$\sum_{t=0}^{T-1} r(z_t, u_t) = \ln\det \Sigma_T$, and consequently
\begin{equation}\label{eq:sum_of_rewards}
    \lim_{T\to\infty} \frac{1}{2T\Delta t} \sum_{t=0}^{T-1} r(z_t, u_t)
    = \Lambda^+(f^\mathbf{ol}) = \mathrm{CIP}.
\end{equation}
\end{theorem}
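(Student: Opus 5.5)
The argument has two steps: a telescoping identity from Lemma~\ref{lem:log_domain}, followed by an appeal to Lemma~\ref{lem:forward_cip}.

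\textbf{Telescoping.} Along any trajectory of the augmented MDP of Proposition~\ref{prop:markov}, the memory $S_t$ evolves by exactly Equation~\ref{eq:memory_increment} from $S_0=\mathbf{I}$. Lemma~\ref{lem:log_domain} then gives $S_t=\Sigma_t^{-1}$ for all $t$, where $\Sigma_t$ is the covariance of Equation~\ref{eq:cov_prop} with $W_t=\mathbf{I}$. I will verify this by induction on $t$, using the Woodbury step already invoked in that lemma. The point to check is that $S_t$ stays invertible along the iteration. This holds because $\Sigma_t\succeq\mathbf{I}$, so $S_t=\Sigma_t^{-1}$ satisfies $0\prec S_t\preceq\mathbf{I}$.

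Consequently the reward in Equation~\ref{eq:reward} equals $h_{t+1}-h_t$ by Equation~\ref{eq:entropy_increment}, where $h_t=\ln\det\Sigma_t$. Summing from $t=0$ to $T-1$ telescopes to $h_T-h_0$. Since $h_0=\ln\det\mathbf{I}=0$, this gives $\sum_{t=0}^{T-1}r(z_t,u_t)=\ln\det\Sigma_T$.

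I will also check the Sylvester step explicitly:
\[
\det\!\left(\mathbf{I}+f_{x_t}\Sigma_t f_{x_t}^\top\right)=\det\Sigma_t\,\det\!\left(S_t+f_{x_t}^\top f_{x_t}\right).
\]
This follows from $\det(\mathbf{I}+AB)=\det(\mathbf{I}+BA)$ with $A=f_{x_t}$ and $B=\Sigma_t f_{x_t}^\top$, after factoring $\Sigma_t$ out.

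\textbf{Limit.} Divide the identity by $2T\Delta t$ and let $T\to\infty$. Lemma~\ref{lem:forward_cip} applies with $W_t=\mathbf{I}$, which is uniformly bounded and positive definite, so the limit of $\frac{1}{2T\Delta t}\ln\det\Sigma_T$ is $\Lambda^+(f^{\mathbf{ol}})=\mathrm{CIP}$. This yields Equation~\ref{eq:sum_of_rewards}.

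\textbf{Main obstacle.} The step needing care is matching the objects in Lemma~\ref{lem:forward_cip} to those generated by the MDP. Lemma~\ref{lem:forward_cip} is stated for the Jacobians $f_{x_t}$ of the open-loop system $f^{\mathbf{ol}}$, with the policy under stop-gradient. The MDP transition uses the Jacobian with respect to $x$ at fixed $(x_t,u_t)$, and I must confirm these coincide.

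The MDP also permits stochastic transitions, so the limit should be read pathwise along the realized trajectory, under whatever ergodicity the Lyapunov-exponent limit requires. I will state this hypothesis as inherited from Lemma~\ref{lem:forward_cip}, not reprove it.

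The remaining mismatch is indexing. Lemma~\ref{lem:forward_cip}'s unrolled Gramian has $T+1$ terms, whereas here $\Sigma_0=\mathbf{I}$ and there are $T$ updates. Any such finite shift changes $\ln\det\Sigma_T$ by a bounded amount (or by shifting $T$ by one), which vanishes after division by $T$.
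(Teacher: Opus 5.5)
Your proposal is correct and follows the paper's proof: you telescope the rewards via Lemma~\ref{lem:log_domain} to obtain $h_T - h_0 = \ln\det\Sigma_T$, then invoke Lemma~\ref{lem:forward_cip} with $W_t = \mathbf{I}$, and your explicit Sylvester, Woodbury and invertibility checks simply unpack what the paper cites. The indexing ``mismatch'' you flag is not real, because $\Sigma_0 = W_0 = \mathbf{I}$ means $T$ updates produce exactly the $(T{+}1)$-term Gramian of Lemma~\ref{lem:forward_cip}; likewise the Jacobian identification follows immediately from the stop-gradient in the definition of $f^{\mathbf{ol}}$.
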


\begin{proof}
By Lemma~\ref{lem:log_domain}, $\sum_{t=0}^{T-1} r_t = h_T - h_0 = \ln\det \Sigma_T$.
The limit follows from Lemma~\ref{lem:forward_cip}. 
\end{proof}

\paragraph{Implementation.}

Theorem~\ref{thm:main} is an average-reward statement, whereas standard policy-gradient
algorithms optimize a discounted objective. In practice we optimize the discounted proxy
with $\gamma$ close to $1$; the discounted-optimal policy converges to the average-reward-optimal policy as $\gamma \to 1$.

The augmented state $z_t = (x_t, S_t)$ is passed to both the value function and the policy.
As $S_t$ is symmetric, only its $n(n{+}1)/2$ unique entries need be carried, and its
boundedness in $(0, \mathbf{I}]$ makes it a well-conditioned network input despite the
augmentation. The Jacobian $f_{x_t}$ is obtained from the Mujoco-MJX \citep{todorov2012mujoco} simulator at each
step. The pseudocode in Algorithm~\ref{alg:cip_adaptation} summarizes the training procedure.

\begin{algorithm}[t]
\caption{Training Forward CIP and Extrinsic Reward}
\label{alg:cip_adaptation}
\begin{algorithmic}[1]
\REQUIRE policy $\pi_\theta$, value function $V_\phi$, iterations $T_{\mathrm{pre}}, T_{\mathrm{adapt}}, T_{\mathrm{warm}}$, extrinsic weight $\beta_{\max}$, horizon $H$, $N$ environments
\FOR{$i := 1, \ldots, T_{\mathrm{pre}} + T_{\mathrm{adapt}}$}
    \STATE $\beta \leftarrow \beta_{\max} \cdot \min\!\left(1, \tfrac{\max(0,\, i - T_{\mathrm{pre}})}{T_{\mathrm{warm}}}\right)$ 
    \FOR{each of $N$ parallel environments}
        \STATE $x_0 \sim \rho_0$, \quad {\color{teal}$S_0 \leftarrow \mathbf{I}$} \hfill {\footnotesize\color{gray} Reset state and memory}
        \FOR{$t := 0, \ldots, H-1$}
            \STATE {\color{teal}$z_t \leftarrow (x_t, S_t)$} \hfill {\footnotesize\color{gray}Proposition\,\ref{prop:markov}}
            \STATE Take action $u_t \sim \pi_\theta(\cdot \mid z_t)$
            \STATE Receive $x_{t+1}$ and $f_{x_t}$
            \STATE {\color{teal}$r_t \leftarrow \ln\det\!\left(S_t + f_{x_t}^\top f_{x_t}\right)$} \hfill {\footnotesize\color{gray}Equation\,\ref{eq:reward}}
            \STATE {\color{teal}$S_{t+1} \leftarrow \mathbf{I} - f_{x_t}(S_t + f_{x_t}^\top f_{x_t})^{-1} f_{x_t}^\top$} \hfill {\footnotesize\color{gray}Equation\,\ref{eq:memory_increment}}
            \STATE $\hat{r}_t \leftarrow r_t + \beta\, v_t$
        \ENDFOR
    \ENDFOR
    \STATE Update $\theta, \phi$ on $\{(z_t, u_t, \hat{r}_t)\}$ \hfill {\footnotesize\color{gray}Standard PPO}
\ENDFOR
\end{algorithmic}
\end{algorithm}

\section{Experiments}

\begin{wrapfigure}{r}{0.5\linewidth}
    \vspace{-6em}
    \centering
    \begin{tikzpicture}
        \node[anchor=south west, inner sep=0] (conv)
            {\includegraphics[width=\linewidth]{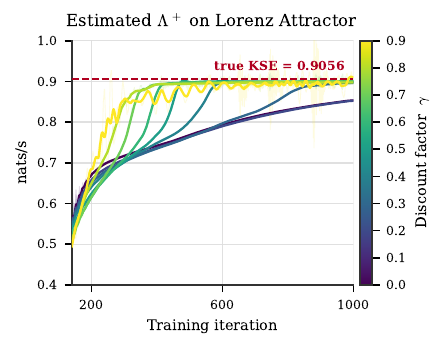}};
        \phantomsubcaption\label{fig:gamma_convergence}

        \begin{scope}[x={(conv.south east)}, y={(conv.north west)}]
            \node[anchor=north west, font=\small] at (0.255,0.86)
                {\bfseries(\subref{fig:gamma_convergence})};

            \node[anchor=south east, inner sep=3pt,
                  fill=white, draw=black!30, line width=0.3pt] (lorenz)
                  at (0.735, 0.27)
                {\includegraphics[width=0.34\linewidth]{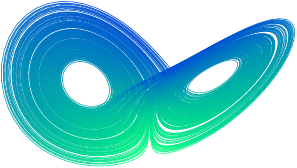}};

            \node[anchor=north west, font=\small, inner sep=1pt]
                at (lorenz.north west) {\bfseries(\subref{fig:lorenz_attractor})};
        \end{scope}
        \phantomsubcaption\label{fig:lorenz_attractor}
    \end{tikzpicture}
    \caption{\textbf{Learning $\Lambda^+$ on the Lorenz attractor}. \textbf{(a)} convergence of the learned value
        function to the known $\Lambda^+$ for a range of
        $\gamma \in [0.0, 0.9]$; \textbf{(b)} the attractor colored by the
        learned $V_\phi(z)$ for $\gamma = 0.9$ where bright green indicates higher value.}
    \label{fig:lorenz}
    \vspace{-2em}
\end{wrapfigure}

Our experiments address three questions: \textbf{Q1.}~Does a learned value function converge to the true $\Lambda^+$ on a known attractor? 
\textbf{Q2.}~What behaviors are induced by maximizing F-CIP, with no extrinsic reward? \textbf{Q3.}~Can F-CIP simplify reward engineering on locomotion tasks?

\paragraph{\textbf{Q1.} $\Lambda^+$ Estimation}

We validate the convergence of our learned incremental estimator on the Lorenz attractor \citep{lorenz2017deterministic} with a known $\Lambda^+$ value of $0.906$ nats/s \citep{sprott_lorenz}. The $\Lambda^+$ is extracted from the learned estimator as $\tfrac{1-\gamma}{2\Delta t} \mathbb{E}[V_\phi(z_t)]$, where the prefactor converts the accumulated discounted return into a rate. Here $V_\phi$ is estimated by TD learning \citep{sutton1998reinforcement} on trajectories sampled from the attractor. Figure~\ref{fig:gamma_convergence} demonstrates that the estimator converges to the known value for a range of discount factors $\gamma$. Setting $\gamma$ to zero still enables estimation of $\Lambda^+$ since that special case corresponds to the value function learning only the instantaneous reward. However, we observe that increasing values of $\gamma$ result in faster convergence to the true $\Lambda^+$. We visualize the output of the value function over a trajectory of the attractor for $\gamma = 0.9$ in Figure~\ref{fig:lorenz_attractor} where bright green coloration corresponds to larger values. Since the value function learns a discounted summation its output can be interpreted as a local indicator of the ``chaos-to-go''. The value function attains its largest values near the splitting point between the two attractor wings, where the local dynamics are most unstable and trajectories can diverge towards either wing. This property motivates using the learned value function as a local signal of proximity to the EOC.

\paragraph{\textbf{Q2.} Emergent Behaviors}

\begin{table}
  \centering
  \setlength{\tabcolsep}{4pt}
  \begin{tabular}{@{}lcccccc@{}}
    \toprule
    &
    \multicolumn{6}{c}{Intrinsic Motivation Objective} \\
    \cmidrule(lr){2-7}
    System
      & \makecell{F-CIP\\(PPO)}
      & \makecell{F-CIP\\(APPO)}
      & DIAYN
      & SMM
      & ICM
      & APT \\
    \midrule
    Cart Pole & $\mathbf{0.99}$ & $\mathbf{0.99}$ & $0.26$ & $0.68$ & $0.07$ & $0.64$ \\
    Double Pendulum & $\mathbf{0.99}$ & $\mathbf{0.99}$ & $0.44$ & $0.06$ & $0.05$ & $0.25$ \\
    Triple Pendulum & $\mathbf{0.99}$ & $\mathbf{0.99}$ & $0.38$ & $0.14$ & $0.09$  & $0.48$ \\
    Gibbon  & $\mathbf{0.98}$ & $\mathbf{0.99}$ & $0.51$ & $0.52$ & $0.05$  & $0.41$ \\
    \bottomrule
  \end{tabular}
  \caption{\textbf{Comparison of IM on pendulum environments}. We report mean extremity height over the final $10$\% of the episode averaged over $10$ random seeds. Values are min-max normalized so that $0.0$ corresponds to the initial hanging configuration and $1.0$ corresponds to the fully upright pose. 
  } \label{tab:results}
\end{table}

Maximizing F-CIP alone, with no extrinsic reward, produces swing-up and upright stabilization: Figure~\ref{fig:trajectories} shows a full swing-up and stabilization trajectory in the gibbon environment. Table~\ref{tab:results} quantifies this across all four systems, reporting the normalized height of each system's extremity (i.e. pendulum tip; gibbon head) over the final $10\%$ of the episode, averaged over $10$ seeds. All agents start from the hanging pose, so the intrinsic objective alone must drive both swing-up and stabilization. F-CIP is the only objective that reliably reaches the upright configuration on all four systems. The skill-based baseline methods (DIAYN and SMM) are evaluated on their best skill, i.e., the one attaining the highest average height. Finally, since the F-CIP objective in Equation~\ref{eq:sum_of_rewards} is naturally an average-reward quantity, we also train an average-reward PPO variant (APPO)~\citep{ma2021averagerewardreinforcementlearningtrust}, which matches the reliability of standard PPO.

\begin{figure*}[t]
    \centering
    \striplabel{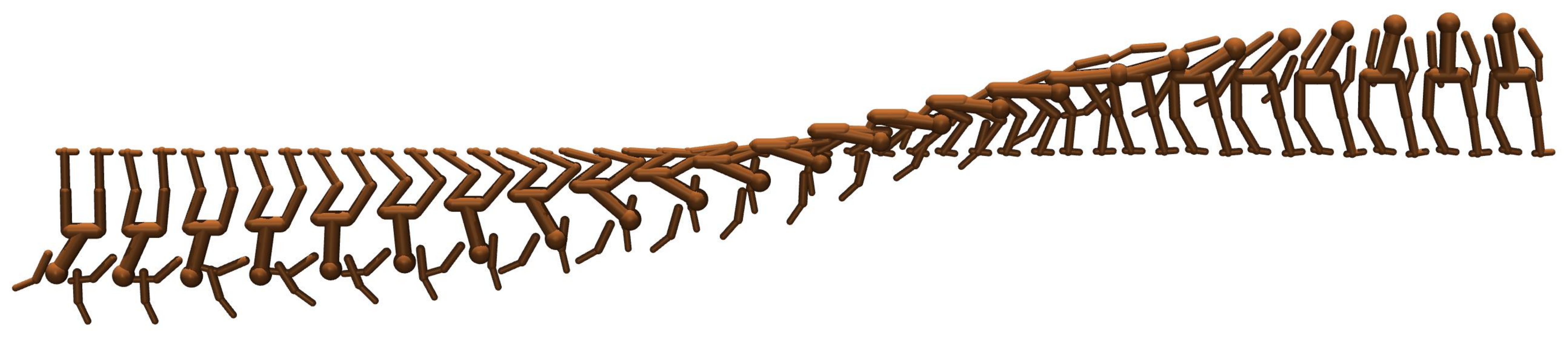}{teal!20}{Gibbon --- Ours: F-CIP pre-training with no extrinsic reward}
    \caption{\textbf{Unsupervised standing up behavior via F-CIP maximization}. This behavior is achieved through maximization of F-CIP alone without any extrinsic reward.}
    \label{fig:trajectories}
\end{figure*}

\paragraph{\textbf{Q3.} Reward Simplification}

\begin{table*}
\centering
\scriptsize
\begin{tabular}{lcccccc}
\toprule
& \multicolumn{3}{c}{\textbf{Hopper}} & \multicolumn{3}{c}{\textbf{Walker}} \\
\cmidrule(lr){2-4}\cmidrule(lr){5-7}
Method & \makecell{Task Return $\uparrow$} & \makecell{Fwd.\ Vel.\ (m/s)} & \makecell{Height (m)}
       & \makecell{Task Return $\uparrow$} & \makecell{Fwd.\ Vel.\ (m/s)} & \makecell{Height (m)} \\
\midrule
Ours                          & $\mathbf{0.39 \pm 0.38}^*$ & $1.47 \pm 0.33$ & $0.73 \pm 0.15$ & $\mathbf{0.60 \pm 0.03}$ & $4.53 \pm 0.11$ & $1.19 \pm 0.04$ \\
Fwd.\ Vel.\ only              & $0.00 \pm 0.00$ & $0.77 \pm 0.16$ & $0.33 \pm 0.03$ & $0.10 \pm 0.04$ & $4.41 \pm 0.22$ & $0.46 \pm 0.10$ \\
ICM$^\dagger$                 & $0.00 \pm 0.00$ & $1.52 \pm 0.55$ & $0.22 \pm 0.05$ & $0.09 \pm 0.04$ & $3.85 \pm 0.41$ & $0.48 \pm 0.12$ \\
APT$^\dagger$                 & $0.00 \pm 0.00$ & $1.63 \pm 0.29$ & $0.23 \pm 0.07$ & $0.10 \pm 0.05$ & $3.38 \pm 1.14$ & $0.47 \pm 0.14$ \\
DIAYN$^\dagger$               & $0.00 \pm 0.00$ & $1.22 \pm 0.17$ & $0.29 \pm 0.12$ & $0.08 \pm 0.03$ & $3.29 \pm 0.33$ & $0.42 \pm 0.09$ \\
SMM$^\dagger$                 & $0.00 \pm 0.00$ & $1.12 \pm 0.10$ & $0.23 \pm 0.04$ & $0.09 \pm 0.04$ & $4.53 \pm 0.51$ & $0.45 \pm 0.10$ \\
\midrule
Full reward \textit{(oracle)} & $0.52 \pm 0.12$ & $1.04 \pm 0.23$   & $0.79 \pm 0.09$ & $0.78 \pm 0.04$ & $6.07 \pm 0.33$ & $1.13 \pm 0.02$ \\
\bottomrule
\end{tabular}
\caption{\textbf{Reward simplification on Hopper and Walker.} Mean and standard deviation calculated over $10$ evaluations of the trained policies. Ours pre-trains with F-CIP alone, then fine-tunes on the forward-velocity reward while continuing to maximize F-CIP (Algorithm~\ref{alg:cip_adaptation}). Task return is the comparison metric and is measured with the full hand-designed reward; \textbf{bold} marks the best performance among the methods. Forward velocity and torso height are reported as diagnostics of the emergent gait. 
Each pretraining baseline ($\dagger$) is run in its intended configuration: the intrinsic objective is maximized during pretraining and then replaced by the velocity reward during fine-tuning. In contrast, Ours continues to maximize F-CIP throughout fine-tuning. \textit{($*$) The high variance on Hopper arises because the oracle reward is zero whenever the torso drops below a fixed height threshold. All 10 seeds hop upright, but some adopt a deeper crouch that falls below this threshold. This strategy allows our policy to hop faster than the oracle ($1.47$ vs. $1.04$ \text{m/s}).}
}
\label{tab:reward-simplification}
\end{table*}

Inducing coherent locomotion strategies such as hopping and walking using RL usually requires careful reward engineering. The simplest reward of maximizing forward velocity describes the desired outcome to move forward but imposes no bias on how that should be achieved. Maximizing a forward velocity reward with an RL agent results in degenerate solutions such as ``scooting'' along the ground (gray rows of Figure~\ref{fig:locomotion-strips}) rather than adopting an upright gait. The standard remedy is to design a reward to incentivize upright posture, requiring domain knowledge about the robot itself. We show that CIP can supply the inductive bias to maintain an upright posture, eliminating the need for this domain knowledge.

Our procedure has two phases (Algorithm~\ref{alg:cip_adaptation}). We first pre-train the agent for $T_\textrm{pre}$ steps to maintain a stable and upright posture using the F-CIP reward (Equation~\ref{eq:reward}). Next, we adapt the pre-trained policy for $T_\textrm{adapt}$ steps by exposing it to a forward velocity reward weighted by a $\beta$ parameter, which warms up from zero towards $\beta_{\max}$ over $T_\textrm{warm}$ steps. The agent optimizes the combined objective: $\hat{r}_t = r_t + \beta v_t$, where $v_t$ is the forward velocity component. Crucially, \textit{our method never optimizes the oracle hand-designed reward}; that reward is only used at test time to evaluate task return.


Table~\ref{tab:reward-simplification} reports rollout statistics for both tasks. All baselines collapse to scooting: they hold a low torso height and accrue near-zero task return. Our method instead learns to hop and to run (see the {\color{teal!70}\textbf{teal}} rows of Figure~\ref{fig:locomotion-strips}), maintaining an upright posture and recovering the majority of the oracle's task return despite never observing that reward.

On hopper, the large variance of our task return is an artifact of the oracle metric rather than of the gait: the oracle reward is gated to zero whenever the torso leaves a narrow height band, and while all $10$ evaluation seeds hop upright, some hop at heights outside this band and score zero despite a correct gait. Notably, our hopping policy travels ${\sim}40\%$ faster than the oracle policy ($1.47$ vs.\ $1.04$ m/s), suggesting that the hand-designed height band could be overly restrictive: it penalizes the torso height excursions that fast hopping requires. On walker, our gait falls within the oracle's height band, yielding correspondingly low variance. Its forward velocity is comparable to the fastest baselines, yet ours is the only method besides the oracle to achieve an upright running gait whereas the baselines attain their speed by scooting.

\section{Discussion}\label{sec:discussion}


We introduced F-CIP, a practical scheme for maximizing CIP with standard RL algorithms. Whereas CIP was previously optimizable only by expensive MPC planners, our exact per-step decomposition makes it compatible with bootstrapped value learning: the learned value function recovers the true $\Lambda^+$ of a known chaotic attractor (\textbf{Q1.}), maximizing the F-CIP reward alone produces self-righting and upright stabilization where standard IM objectives, do not (\textbf{Q2.}), and F-CIP pre-training replaces the postural reward engineering that locomotion tasks otherwise require (\textbf{Q3.}).

These results support a broader position: intrinsic motivation need not be confined to an exploration bonus. Traditionally in continuous control, IM objectives serve almost exclusively as auxiliary signals. This aligns with their underlying design, as principles like coverage and surprise are inherently geared toward exploration. CIP is a different IM signal: because it targets EOC states directly, it produces useful behavior in both roles. As a primary objective it discovers self-righting with no external signal; as an auxiliary term it supplies an inductive bias toward controllable instability that reward engineers otherwise encode by hand.

Our method has three main limitations. First, the reward requires analytical Jacobians. Differentiable simulators such as MuJoCo-MJX \citep{todorov2012mujoco} provide these efficiently, and F-CIP is well suited to this setting; physical reality, however, offers no Jacobians, so sim-to-real transfer will require either training entirely in simulation or learning the Jacobians from data. Second, the memory matrix $S_t$ grows quadratically in the state dimension. On-policy training requires every parallel environment to carry its own $S_t$, and off-policy training would require storing it per-transition in a replay buffer, so scaling to high-dimensional robots demands a compressed representation. Third, CIP admits undesireable maximizers, analogous to the noisy-TV problem of curiosity \citep{burda2018exploration, burda2018large}: in the pendulum systems the agent can accumulate reward by spinning freely, and in the hopper by uncontrolled thrashing. Adding joint damping removed these optima in our environments: damping taxes indiscriminate velocity accumulation while leaving configuration-dependent instability rewarded. However, this is an environment-level modification rather than an objective-level one, and a principled resolution remains open.

Each limitation suggests future work. Jacobians can in principle be extracted from a learned world model: one-step forward models of the kind trained by curiosity methods \citep{pathak2017curiosity} are differentiable by construction, which would make F-CIP fully learning-based and simulator-agnostic. The $S_t$ bottleneck can be addressed with feature extractors that respect the symmetric positive-semi-definite structure of the memory, such as SPDNet \citep{huang2017riemannian}, mirroring the use of pretrained encoders for image observations in RL \citep{parisi2022unsurprisingeffectivenesspretrainedvision}. Finally, we plan to investigate alternative formulations of CIP to isolate its desirable maximizers via principled restrictions on the agent's control power and velocities.



\section*{AI Disclosure}

In this work, we used generative AI tools for implementation: they assisted in writing some functions in the codebase and in building the project webpage. We have not used generative AI tools for data generation, proof formulation, methodology design, or result interpretation. Additionally, we used generative AI tools for formatting figures and tables in \LaTeX{} and for polishing the writing for grammar and clarity. We reviewed all AI-assisted work: code and webpage were checked and tested by the authors, and all edited text was verified to preserve its original meaning. We take full responsibility for the final content.
\section*{Ethics Statement}

This work studies intrinsic motivation for reinforcement learning in
simulated physical systems (pendulum benchmarks and the DeepMind Control
Suite hopper and walker). It involves no human subjects, personal data, or
sensitive information, and all experiments are conducted in simulation.
We do not foresee direct negative societal impacts. As with any advance in
autonomous control, the methods could eventually contribute to robotic
systems whose deployment raises safety considerations; we note that F-CIP
encourages controllable rather than uncontrolled behavior, and that
sim-to-real transfer remains an open problem for our method
(Section~\ref{sec:discussion}). The authors have read and adhere to the
ICLR Code of Ethics.

\section*{Reproducibility Statement}

All experiments are conducted in simulation using publicly available
environments: the pendulum and gibbon benchmarks of \citet{shah2026controllable}
and the hopper and walker tasks from the DeepMind Control Suite
\citep{Tunyasuvunakool_2020}, with the modifications (joint
damping, gear strength, reset distributions) described in
Appendix~\ref{app:environments}. Section~\ref{sec:method} contains complete
proofs of Lemmas~\ref{lem:forward_cip} and~\ref{lem:log_domain},
Proposition~\ref{prop:markov}, and Theorem~\ref{thm:main}, which together
define the F-CIP reward (Equation~\ref{eq:reward}) and memory update
(Equation~\ref{eq:memory_increment}). Algorithm~\ref{alg:cip_adaptation}
gives the full training procedure, with per-environment hyperparameters in
Table~\ref{tab:hyperparams}, the training and evaluation protocol in
Appendix~\ref{app:training}, and compute requirements in
Appendix~\ref{app:compute}. All reported results are averaged over 10
random seeds. Source code, environment configurations, and training
scripts are provided in an anonymized repository at \href{https://anonymous.4open.science/r/cip_rl-23B3/README.md}{\color{magenta}\nolinkurl{anonymous.4open.science/r/cip_rl-23B3/README.md}}.






\bibliography{iclr2027_conference}

\begin{thebibliography}{38}
\providecommand{\natexlab}[1]{#1}
\providecommand{\url}[1]{\texttt{#1}}
\expandafter\ifx\csname urlstyle\endcsname\relax
  \providecommand{\doi}[1]{doi: #1}\else
  \providecommand{\doi}{doi: \begingroup \urlstyle{rm}\Url}\fi

\bibitem[Burda et~al.(2019{\natexlab{a}})Burda, Edwards, Pathak, Storkey, Darrell, and Efros]{burda2018large}
Yuri Burda, Harri Edwards, Deepak Pathak, Amos Storkey, Trevor Darrell, and Alexei~A. Efros.
\newblock Large-scale study of curiosity-driven learning.
\newblock In \emph{International Conference on Learning Representations}, 2019{\natexlab{a}}.

\bibitem[Burda et~al.(2019{\natexlab{b}})Burda, Edwards, Storkey, and Klimov]{burda2018exploration}
Yuri Burda, Harrison Edwards, Amos Storkey, and Oleg Klimov.
\newblock Exploration by random network distillation.
\newblock In \emph{International Conference on Learning Representations}, 2019{\natexlab{b}}.
\newblock URL \url{https://openreview.net/forum?id=H1lJJnR5Ym}.

\bibitem[Eysenbach et~al.(2019)Eysenbach, Gupta, Ibarz, and Levine]{eysenbach2018diversityneedlearningskills}
Benjamin Eysenbach, Abhishek Gupta, Julian Ibarz, and Sergey Levine.
\newblock Diversity is all you need: Learning skills without a reward function.
\newblock In \emph{International Conference on Learning Representations}, 2019.
\newblock URL \url{https://iclr.cc/virtual/2019/poster/720}.
\newblock Poster.

\bibitem[Gregor et~al.(2016)Gregor, Rezende, and Wierstra]{gregor2016variationalintrinsiccontrol}
Karol Gregor, Danilo~Jimenez Rezende, and Daan Wierstra.
\newblock Variational intrinsic control, 2016.
\newblock URL \url{https://arxiv.org/abs/1611.07507}.

\bibitem[Howell et~al.(2022)Howell, Gileadi, Tunyasuvunakool, Zakka, Erez, and Tassa]{howell2022predictive}
Taylor Howell, Nimrod Gileadi, Saran Tunyasuvunakool, Kevin Zakka, Tom Erez, and Yuval Tassa.
\newblock Predictive sampling: Real-time behaviour synthesis with mujoco.
\newblock \emph{arXiv preprint arXiv:2212.00541}, 2022.

\bibitem[Huang \& Van~Gool(2017)Huang and Van~Gool]{huang2017riemannian}
Zhiwu Huang and Luc Van~Gool.
\newblock A riemannian network for spd matrix learning.
\newblock In \emph{Proceedings of the AAAI conference on artificial intelligence}, volume~31, 2017.

\bibitem[Karl et~al.(2022)Karl, Becker-Ehmck, Soelch, Benbouzid, van~der Smagt, and Bayer]{karl2017unsupervisedrealtimecontrolvariational}
Maximilian Karl, Philip Becker-Ehmck, Maximilian Soelch, Djalel Benbouzid, Patrick van~der Smagt, and Justin Bayer.
\newblock Unsupervised real-time control through variational empowerment.
\newblock In Tamim Asfour, Eiichi Yoshida, Jaeheung Park, Henrik Christensen, and Oussama Khatib (eds.), \emph{Robotics Research}, pp.\  158--173, Cham, 2022. Springer International Publishing.
\newblock ISBN 978-3-030-95459-8.

\bibitem[Klyubin et~al.(2005)Klyubin, Polani, and Nehaniv]{klyubin2005empowerment}
Alexander~S Klyubin, Daniel Polani, and Chrystopher~L Nehaniv.
\newblock Empowerment: A universal agent-centric measure of control.
\newblock In \emph{2005 ieee congress on evolutionary computation}, volume~1, pp.\  128--135. IEEE, 2005.

\bibitem[Laskin et~al.(2021)Laskin, Yarats, Liu, Lee, Zhan, Lu, Cang, Pinto, and Abbeel]{laskin2021urlb}
Michael Laskin, Denis Yarats, Hao Liu, Kimin Lee, Albert Zhan, Kevin Lu, Catherine Cang, Lerrel Pinto, and Pieter Abbeel.
\newblock Urlb: Unsupervised reinforcement learning benchmark.
\newblock \emph{arXiv preprint arXiv:2110.15191}, 2021.

\bibitem[Lee et~al.(2025)Lee, Sun, and Abbeel]{lee2025semdice}
Jongmin Lee, Meiqi Sun, and Pieter Abbeel.
\newblock Semdice: Off-policy state entropy maximization via stationary distribution correction estimation.
\newblock \emph{arXiv preprint arXiv:2512.10042}, 2025.

\bibitem[Lee et~al.(2020)Lee, Eysenbach, Parisotto, Xing, Levine, and Salakhutdinov]{lee2020efficientexplorationstatemarginal}
Lisa Lee, Benjamin Eysenbach, Emilio Parisotto, Eric Xing, Sergey Levine, and Ruslan Salakhutdinov.
\newblock Efficient exploration via state marginal matching, 2020.
\newblock URL \url{https://arxiv.org/abs/1906.05274}.

\bibitem[Liu \& Abbeel(2021)Liu and Abbeel]{liu2021behaviorvoidunsupervisedactive}
Hao Liu and Pieter Abbeel.
\newblock Behavior from the void: Unsupervised active pre-training, 2021.
\newblock URL \url{https://arxiv.org/abs/2103.04551}.

\bibitem[Lorenz(2017)]{lorenz2017deterministic}
Edward~N Lorenz.
\newblock Deterministic nonperiodic flow 1.
\newblock In \emph{Universality in Chaos, 2nd edition}, pp.\  367--378. Routledge, 2017.

\bibitem[Ma et~al.(2021)Ma, Tang, Xia, Yang, and Zhao]{ma2021averagerewardreinforcementlearningtrust}
Xiaoteng Ma, Xiaohang Tang, Li~Xia, Jun Yang, and Qianchuan Zhao.
\newblock Average-reward reinforcement learning with trust region methods, 2021.
\newblock URL \url{https://arxiv.org/abs/2106.03442}.

\bibitem[Nguyen et~al.(2025)Nguyen, Polani, and Tiomkin]{phu_nguyen_2025}
Phu Nguyen, Daniel Polani, and Stas Tiomkin.
\newblock Suple: Robot learning with lyapunov rewards.
\newblock In \emph{2025 IEEE International Conference on Robotics and Automation (ICRA)}, pp.\  1177--1183, 2025.
\newblock \doi{10.1109/ICRA55743.2025.11128350}.

\bibitem[Parisi et~al.(2022)Parisi, Rajeswaran, Purushwalkam, and Gupta]{parisi2022unsurprisingeffectivenesspretrainedvision}
Simone Parisi, Aravind Rajeswaran, Senthil Purushwalkam, and Abhinav Gupta.
\newblock The unsurprising effectiveness of pre-trained vision models for control, 2022.
\newblock URL \url{https://arxiv.org/abs/2203.03580}.

\bibitem[Pathak et~al.(2017)Pathak, Agrawal, Efros, and Darrell]{pathak2017curiosity}
Deepak Pathak, Pulkit Agrawal, Alexei~A Efros, and Trevor Darrell.
\newblock Curiosity-driven exploration by self-supervised prediction.
\newblock In \emph{International conference on machine learning}, pp.\  2778--2787. PMLR, 2017.

\bibitem[Pathak et~al.(2019)Pathak, Gandhi, and Gupta]{pathak2019selfsupervisedexplorationdisagreement}
Deepak Pathak, Dhiraj Gandhi, and Abhinav Gupta.
\newblock Self-supervised exploration via disagreement, 2019.
\newblock URL \url{https://arxiv.org/abs/1906.04161}.

\bibitem[Pesin(1977)]{pesin1977characteristic}
Ya~B Pesin.
\newblock Characteristic lyapunov exponents and smooth ergodic theory.
\newblock \emph{Russian Mathematical Surveys}, 32\penalty0 (4):\penalty0 55, 1977.

\bibitem[Pinneri et~al.(2021)Pinneri, Sawant, Blaes, Achterhold, Stueckler, Rolinek, and Martius]{pinneri2020sampleefficientcrossentropymethodrealtime}
Cristina Pinneri, Shambhuraj Sawant, Sebastian Blaes, Jan Achterhold, Joerg Stueckler, Michal Rolinek, and Georg Martius.
\newblock Sample-efficient cross-entropy method for real-time planning.
\newblock In \emph{Conference on Robot Learning}, pp.\  1049--1065. PMLR, 2021.

\bibitem[Raileanu \& Rockt{\"a}schel(2020)Raileanu and Rockt{\"a}schel]{raileanu2020ride}
Roberta Raileanu and Tim Rockt{\"a}schel.
\newblock Ride: Rewarding impact-driven exploration for procedurally-generated environments.
\newblock \emph{arXiv preprint arXiv:2002.12292}, 2020.

\bibitem[Schulman et~al.(2017)Schulman, Wolski, Dhariwal, Radford, and Klimov]{schulman2017proximalpolicyoptimizationalgorithms}
John Schulman, Filip Wolski, Prafulla Dhariwal, Alec Radford, and Oleg Klimov.
\newblock Proximal policy optimization algorithms, 2017.
\newblock URL \url{https://arxiv.org/abs/1707.06347}.

\bibitem[Shah \& Tiomkin(2026)Shah and Tiomkin]{shah2026controllable}
Tristan Shah and Stas Tiomkin.
\newblock Emergence of physical intelligence via controllable information production.
\newblock In \emph{Advances in Neural Information Processing Systems}, volume 39, Main Conference, 2026.
\newblock Oral presentation.

\bibitem[Sharma et~al.(2019)Sharma, Gu, Levine, Kumar, and Hausman]{sharma2019dynamics}
Archit Sharma, Shixiang Gu, Sergey Levine, Vikash Kumar, and Karol Hausman.
\newblock Dynamics-aware unsupervised discovery of skills.
\newblock \emph{arXiv preprint arXiv:1907.01657}, 2019.

\bibitem[Sprott(1997)]{sprott_lorenz}
Julien~Clinton Sprott.
\newblock Lyapunov exponent and dimension of the lorenz attractor.
\newblock \url{https://sprott.physics.wisc.edu/chaos/lorenzle.htm}, 1997.
\newblock Accessed: 2026-05-01.

\bibitem[Sutton et~al.(1998)Sutton, Barto, et~al.]{sutton1998reinforcement}
Richard~S Sutton, Andrew~G Barto, et~al.
\newblock \emph{Reinforcement learning: An introduction}, volume~1.
\newblock MIT press Cambridge, 1998.

\bibitem[Tassa et~al.(2012)Tassa, Erez, and Todorov]{tassa2012synthesis}
Yuval Tassa, Tom Erez, and Emanuel Todorov.
\newblock Synthesis and stabilization of complex behaviors through online trajectory optimization.
\newblock In \emph{2012 IEEE/RSJ International Conference on Intelligent Robots and Systems}, pp.\  4906--4913. IEEE, 2012.

\bibitem[Tiomkin et~al.(2024)Tiomkin, Nemenman, Polani, and Tishby]{tiomkin2024intrinsic}
Stas Tiomkin, Ilya Nemenman, Daniel Polani, and Naftali Tishby.
\newblock Intrinsic motivation in dynamical control systems.
\newblock \emph{PRX Life}, 2\penalty0 (3):\penalty0 033009, 2024.

\bibitem[Todorov(2008)]{todorov_2008_general}
Emanuel Todorov.
\newblock General duality between optimal control and estimation.
\newblock In \emph{2008 47th IEEE Conference on Decision and Control}, pp.\  4286--4292, 2008.
\newblock \doi{10.1109/CDC.2008.4739438}.

\bibitem[Todorov et~al.(2012)Todorov, Erez, and Tassa]{todorov2012mujoco}
Emanuel Todorov, Tom Erez, and Yuval Tassa.
\newblock Mujoco: A physics engine for model-based control.
\newblock In \emph{2012 IEEE/RSJ International Conference on Intelligent Robots and Systems}, pp.\  5026--5033. IEEE, 2012.
\newblock \doi{10.1109/IROS.2012.6386109}.

\bibitem[Tower(2024)]{tower2024selectively}
John Tower.
\newblock Selectively advantageous instability in biotic and pre-biotic systems and implications for evolution and aging.
\newblock \emph{Frontiers in Aging}, 5:\penalty0 1376060, 2024.

\bibitem[Tunyasuvunakool et~al.(2020)Tunyasuvunakool, Muldal, Doron, Liu, Bohez, Merel, Erez, Lillicrap, Heess, and Tassa]{Tunyasuvunakool_2020}
Saran Tunyasuvunakool, Alistair Muldal, Yotam Doron, Siqi Liu, Steven Bohez, Josh Merel, Tom Erez, Timothy Lillicrap, Nicolas Heess, and Yuval Tassa.
\newblock dm\_control: Software and tasks for continuous control.
\newblock \emph{Software Impacts}, 6:\penalty0 100022, November 2020.
\newblock ISSN 2665-9638.
\newblock \doi{10.1016/j.simpa.2020.100022}.
\newblock URL \url{http://dx.doi.org/10.1016/j.simpa.2020.100022}.

\bibitem[Williams et~al.(2015)Williams, Aldrich, and Theodorou]{williams2015model}
Grady Williams, Andrew Aldrich, and Evangelos Theodorou.
\newblock Model predictive path integral control using covariance variable importance sampling.
\newblock \emph{arXiv preprint arXiv:1509.01149}, 2015.

\bibitem[Wissner-Gross \& Freer(2013)Wissner-Gross and Freer]{wissner_2013}
A.~D. Wissner-Gross and C.~E. Freer.
\newblock Causal entropic forces.
\newblock \emph{Phys. Rev. Lett.}, 110:\penalty0 168702, Apr 2013.
\newblock \doi{10.1103/PhysRevLett.110.168702}.
\newblock URL \url{https://link.aps.org/doi/10.1103/PhysRevLett.110.168702}.

\bibitem[Wolf et~al.(1985)Wolf, Swift, Swinney, and Vastano]{wolf1985determining}
Alan Wolf, Jack~B Swift, Harry~L Swinney, and John~A Vastano.
\newblock Determining lyapunov exponents from a time series.
\newblock \emph{Physica D: nonlinear phenomena}, 16\penalty0 (3):\penalty0 285--317, 1985.

\bibitem[Zhang et~al.(2025{\natexlab{a}})Zhang, Howell, Yi, Pan, Shi, Qu, Erez, Tassa, and Manchester]{zhang2025whole}
John~Z Zhang, Taylor~A Howell, Zeji Yi, Chaoyi Pan, Guanya Shi, Guannan Qu, Tom Erez, Yuval Tassa, and Zachary Manchester.
\newblock Whole-body model-predictive control of legged robots with mujoco.
\newblock \emph{arXiv preprint arXiv:2503.04613}, 2025{\natexlab{a}}.

\bibitem[Zhang et~al.(2025{\natexlab{b}})Zhang, Patel, Rizvi, Liu, He, Karbasi, Zappala, and van Dijk]{zhang2024intelligence}
Shiyang Zhang, Aakash Patel, Syed Rizvi, Nianchen Liu, Sizhuang He, Amin Karbasi, Emanuele Zappala, and David van Dijk.
\newblock Intelligence at the edge of chaos.
\newblock In Y.~Yue, A.~Garg, N.~Peng, F.~Sha, and R.~Yu (eds.), \emph{International Conference on Learning Representations}, volume 2025, pp.\  86576--86592, 2025{\natexlab{b}}.
\newblock URL \url{https://proceedings.iclr.cc/paper_files/paper/2025/file/d791394d32c428aecc7a5b101fb47799-Paper-Conference.pdf}.

\bibitem[Zheng et~al.(2025)Zheng, Tuyls, Peng, and Eysenbach]{zheng2025can}
Chongyi Zheng, Jens Tuyls, Joanne Peng, and Benjamin Eysenbach.
\newblock Can a misl fly? analysis and ingredients for mutual information skill learning.
\newblock In \emph{International Conference on Learning Representations}, volume 2025, pp.\  31876--31903, 2025.

\end{thebibliography}
\bibliographystyle{iclr2027_conference}

\appendix
\section{Appendix}

\subsection{Algorithm}\label{app:algorithm}

Algorithm~\ref{alg:cip_adaptation} outlines the full training procedure. The initial-state distribution $\rho_0$ is the reset distribution: the hanging configuration for the pendulum systems and the default reset for the locomotion environments. The pure-intrinsic experiment (Q2) runs only the first $T_{\mathrm{pre}}$ iterations, during which $\beta = 0$; the locomotion experiment (Q3) additionally runs $T_{\mathrm{adapt}}$ adaptation iterations with $\beta$ annealed from $0$ to $\beta_{\max}$ over $T_{\mathrm{warm}}$ iterations.

\subsection{Hyperparameters}

\begin{table}
\centering
\caption{Hyperparameters. Environments: CP (cart pole), DP (double
pendulum), TP (triple pendulum), G (gibbon), H (hopper), W (walker).
$T_{\mathrm{pre}}$, $T_{\mathrm{adapt}}$, and $T_{\mathrm{warm}}$ are
measured in PPO iterations, and $\beta_{\max}$ is the maximum
extrinsic reward coefficient. Batch size denotes the
minibatch size used for each gradient step.}
\label{tab:hyperparams}
\small
\setlength{\tabcolsep}{4pt}
\begin{tabular}{@{}lcccccc@{}}
\toprule
 & CP & DP & TP & G & H & W \\
\midrule
\multicolumn{7}{@{}l}{\textit{Environment-specific}}\\
$T_{\mathrm{pre}}$   & 3500  & 3800  & 10000 & 10000 & 6000  & 1500  \\
$T_{\mathrm{adapt}}$ & 0     & 0     & 0     & 0     & 7000  & 7000  \\
$T_{\mathrm{warm}}$  & 0     & 0     & 0     & 0     & 5000  & 5000    \\
$\beta_{\max}$       & 0     & 0     & 0     & 0     & 0.1   & 0.1   \\
Rollout steps        & 64    & 64    & 64    & 256   & 256   & 256   \\
Batch size           & 16384 & 16384 & 16384 & 32768 & 32768 & 32768 \\
\addlinespace
\multicolumn{7}{@{}l}{\textit{Shared across all environments}}\\
Parallel envs        & \multicolumn{6}{c}{4096} \\
Discount $\gamma$    & \multicolumn{6}{c}{0.999} \\
GAE $\lambda$        & \multicolumn{6}{c}{0.95} \\
Clip $\epsilon$      & \multicolumn{6}{c}{0.2} \\
Entropy coef.        & \multicolumn{6}{c}{0.1} \\
Epochs per update    & \multicolumn{6}{c}{2} \\
Learning rate        & \multicolumn{6}{c}{$5 \times 10^{-4}$} \\
Optimizer            & \multicolumn{6}{c}{Adam} \\
Hidden layers        & \multicolumn{6}{c}{2} \\
Hidden dim           & \multicolumn{6}{c}{256} \\
Activation           & \multicolumn{6}{c}{Leaky ReLU} \\
\bottomrule
\end{tabular}
\end{table}

All policy and value networks are MLPs with the architecture given in Table~\ref{tab:hyperparams}; the policy outputs a Gaussian with a state-dependent learned standard deviation.

\subsection{Computing Resources}\label{app:compute}
All experiments and model training described in this work were conducted on a single compute node equipped with two NVIDIA H100 Tensor Core GPUs and 1.0 TiB of system memory.

\subsection{Environments}\label{app:environments}

For the experiments involving pure CIP maximization we evaluate on four systems of increasing dimensionality: cart pole, double pendulum, triple pendulum, and gibbon, a planar humanoid-like system. These are the benchmarks of \citet{shah2026controllable}, with modifications: we add joint damping to the triple pendulum ($3.5$) and gibbon ($5.0$). Without damping, the policy discovers trivial maximizers of CIP, spinning the free joints to accumulate unbounded velocities rather than seeking unstable configurations. Damping bounds the attainable velocities and thereby removes these degenerate optima. We also reduce the gear strength on the original gibbon environment to $50.0$.

For experiments regarding locomotion: hopper and walker, we use the default environments from the DeepMind control suite \citep{Tunyasuvunakool_2020}. For both we set the internal joint damping to $1.0$. Without damping we observed uncontrolled thrashing behavior as a maximizer of the CIP objective.

\subsection{Training and Evaluation}\label{app:training}

All baseline methods are trained under identical environmental conditions: the same environments, gear and damping settings, and the same reset distribution ($x_0 \sim \rho_0(\cdot)$). They differ only in the intrinsic objective. Episodes reset to the hanging configuration in cart pole, double pendulum, and triple pendulum, so that the intrinsic reward alone must drive the exploration from the passive state to the upright one, without the agent ever being seeded there. In the gibbon environment, resets are split evenly between the hanging configuration and the vicinity of the upright stand. For hopper and walker the environments are dropped from a height above the ground with a small perturbation noise applied to its initial position.

\subsection{Baselines}


We compare against four popular intrinsic objectives: DIAYN \citep{eysenbach2018diversityneedlearningskills}, SMM \citep{lee2020efficientexplorationstatemarginal}, ICM \citep{pathak2017curiosity} and APT \citep{liu2021behaviorvoidunsupervisedactive}. Skill-based methods (DIAYN and SMM) are evaluated on the skill achieving the highest average height in Table~\ref{tab:results} and Table~\ref{tab:reward-simplification}.

\end{document}